\newtheorem{theorem}{Theorem}
\author[1,2]{Li-Chun Zhang}
\author[3]{Danhyang Lee}
\affil[1]{\em \small Statistisk sentralbyrå (lcz@ssb.no)}
\affil[2]{\em \small University of Southampton (L.Zhang@soton.ac.uk)}
\affil[2]{\em \small University of Alabama (dlee84@cba.ua.edu)}
\title{Design-based individual prediction}
\date{}
\begin{document}

\maketitle

\begin{abstract} \noindent
A design-based individual prediction approach is developed based on the expected cross-validation results, given the sampling design and the sample-splitting design for cross-validation. Whether the predictor is selected from an ensemble of models or a weighted average of them, valid inference of the unobserved prediction errors is defined and obtained with respect to the sampling design, while outcomes and features are treated as constants. 
\end{abstract}

\noindent
\textbf{Keywords:} Probability sampling; Ensemble learning; Rao-Blackwellisation.

\section{Introduction}

Valid inference of the unobserved individual prediction errors is a fundamental issue to supervised machine learning, no matter how confident one is about the obtained predictor. An IID model of the prediction errors is commonly assumed for algorithm-based learning, such as random forest, support vector machine or neural network, which could be misleading in situations where the available observations are not obtained in a completely random fashion. 

We define and develop a design-based approach to individual prediction, which requires the sample for learning to be selected by a  probability design. Whether the adopted predictor is selected from an ensemble of models or a weighted average of them, the proposed approach can provide valid inference of the associated risk with respect to the known sampling design, “irrespectively of the unknown properties of the target population studied” \citep{neyman1934two}. 

For an illustration of the conceptual issue at hand, suppose on observing an outcome value $y$ in a subset $s$ of a finite collection of units $U$, denoted by $\{ y_i : i\in s\}$ and $s\subset U$, one would like to predict the $y$-value for each unit out of $s$ by $\mu(s) = \sum_{i\in s} y_i/n$, where $n$ is the number of units in $s$. How can one infer about the loss $D_s =\sum_{j\in U\setminus s} \{\mu(s) - y_j \}^2$ that is unobserved?

One possibility is to evaluate $D_s$ using an assumed model. Under the model that $y_i$ is independent and identically distributed (IID), for any $i\in U$, we have
\[
E_M(D_s \mid s) = (N-n) \big( 1 + n^{-1} \big) \sigma^2
\]
where $E_M(\cdot |s)$ denotes expectation with respect to the IID model conditional on the given subset $s$, and $\sigma^2$ is the variance of $y_i$ under the model, and $N$ is the number of units in $U$.

A fundamentally different, design-based approach would be possible if $s$ is selected from $U$ by a known sampling design, denoted by $s\sim p(s)$, where $\sum_{s\in \Omega} p(s) =1$ and $\Omega$ contains all the possible samples from $U$. For instance, we have $p(s) = 1/\binom{N}{n}$ if $s$ is selected from $U$ by simple random sampling without replacement. We have then
\[
E_p(D_s) = (N-n) \big( 1 + n^{-1} \big) S_y^2
\]
where $E_p$ denotes expectation with respect to $p(s)$, while $y_U = \{ y_i : i\in U\}$ are treated as constants and $S_y^2 = \sum_{i\in U} (y_i - \bar{Y})^2/(N-1)$ and $\bar{Y} = \sum_{i\in U} y_i/N$.   

Since $s_y^2 = \sum_{i\in s} \{y_i - \mu(s) \}^2/(n-1)$ is both unbiased for $\sigma^2$ under the IID model {and} unbiased for $S_y^2$ under simple random sampling, numerically one would obtain the same estimate of the expected loss, although the two measures have completely different interpretations. While some assumed model would be necessary for the measure $E_M(D_s |s)$ if the selection mechanism of $s$ is unknown, it could be invalid if the observed data distribution actually differs to that of the unobserved ones. While the design-based measure $E_p(D_s)$ requires one to plan and implement the simple random sampling design here, it is necessarily valid because the sampling distribution $p(s)$ is known. 

The general design-based individual prediction approach we develop in this paper can use any model or algorithm-based predictor learned from the observed sample. The prediction errors over the out-of-sample units are evaluated with respect to the known sampling design, while all the outcomes and relevant features in the population are treated as constants. We note that \cite{sande2021design} study design-unbiased learning for estimating population totals such as $Y = \sum_{i\in U} y_i$ in the same spirit. 

By the design-based approach, there is no need to assume that a true model exists for $y_U$, or that one is able to identify and learn the true predictor under repeated sampling. It is then natural to combine an ensemble of different predictors (e.g. \citealp{dietterich2000ensemble, zhou2012ensemble, sagi2018ensemble, dong2020survey}) in addition to selecting a single best predictor. Design-based ensemble learning by voting or averaging will be developed.

\section{Theory} \label{sec:theory}

In addition to $y_U$, denote by $x_U = \{ x_i : i\in U\}$ the vectors of features. Given any sample $s$, let $\mu(x, s)$ be the predictor for the out-of-sample units, $R = U\setminus s$. Regardless how $\mu(x, s)$ is obtained from $y_s = \{ y_i : i\in s\}$ and $x_s = \{ x_i : i\in s\}$, its unobserved \emph{total squared error of prediction (TSEP)} is given by
\[
D(s; \mu) = \sum_{i\in R} \left\{ \mu(x_i, s) - y_i \right\}^2 ~.
\]
We define the \emph{risk} of $\mu$ to be the expectation of $D(s; \mu)$ over repeated sampling of $s\sim p(s)$, denoted by
\begin{equation} \label{risk}
\tau(\mu) = E_p\left\{ D(s; \mu) \right\}.
\end{equation} 
Design-based prediction is guided by the risk \eqref{risk} over the sampling distribution $p(s)$. First, valid estimation of $\tau$ is developed below, where $s$ is random but $y_U$ and $x_U$ are treated as constants associated with the given $U$. Then, design-based approaches to ensemble learning by voting and averaging are described.

\subsection{Sample split and associated TSEP estimation}

Denote by $s_1 \cup s_2 = s$ and $s_1 \cap s_2 = \emptyset$ a \emph{training-test sample split}, where $s_1$ is selected by a sample-splitting design, denoted by $q(s_1 | s)$. For instance, $s_1$ of a given size can be sampled from $s$ with or without replacement. In $T$-fold cross-validation, $s$ is first randomly divided into $T$ clusters and then each cluster is selected as $s_2$ one by one systematically, yielding $s_1 = s\setminus s_2$ accordingly. 

Given any sample-split  $(s_1, s_2)$, and any predictor $\mu(x, s_1)$ that is trained on $s_1$, let its prediction error for any $i\notin s_1$ be 
\[
e_i(\mu, s_1) = \mu(x_i, s_1) - y_i 
\]
which can be observed for any $i \in s_2$ but not for any $i\in R$. However, the TSEP \
\[
D_R(s_1; \mu) = \sum_{i\in R} e_i(\mu, s_1)^2
\]
on applying $\mu(x, s_1)$ to $R = U\setminus s$ can be estimated unbiasedly as follows. 

Given the training set $s_1$, for any $i\in U\setminus s_1$, let 
\begin{equation} \label{pi2}
\pi_{2i} = \Pr(i\in s_2 \mid s_1)  = \sum_{s: i\in s \cap i\notin s_1} f(s\mid s_1)
\end{equation} 
be the conditional inclusion probability in $s_2$ given $s_1$, which is derived from the given $pq$-design defined by $p(s)$ and $q(s_1 \mid s)$, i.e.  
\[
f(s_1, s) = q(s_1 \mid s) p(s) = f(s \mid s_1) f(s_1) ~.
\] 
The estimator 
\[
\hat{D}_R(s_1; \mu) =  \sum_{i\in s_2} \big( \pi_{2i}^{-1} - 1 \big) e_i(\mu, s_1)^2 
\]
is unbiased for $D_R(s_1; \mu)$ conditional on $s_1$ in the sense that
\begin{equation} \label{subsample}
E_s\{ \hat{D}_R(s_1; \mu) \mid s_1\} = E_s\{ D_R(s_1; \mu) \mid s_1 \}~.
\end{equation}
To derive \eqref{subsample}, let $A_2 = \sum_{i\in s_2} e_i(\mu, s_1)^2 = \sum_{i\in U\setminus s_1} e_i(\mu, s_1)^2 - D_R(s_1; \mu)$. Both $A_2$ and $D_R(s_1; \mu)$ vary with $s_2 = s\setminus s_1$ but their sum $\sum_{i\in U\setminus s_1} e_i(\mu, s_1)^2$ is fixed given $s_1$. Thus, conditionally over all possible $s$ given $s_1$, we have
\[
E_s\{ \hat{D}_R(s_1; \mu) \mid s_1\} = E_s\{ \sum_{i\in s_2} \pi_{2i}^{-1} e_i(\mu, s_1)^2 - A_2 \mid s_1 \} =  E_s\{ D_R(s_1; \mu) \mid s_1 \}~.
\]

\subsection{Subsampling Rao-Blackwellisation and risk}

One can only observe the residuals of any $\mu(x, s)$ that is trained directly on the full sampl $s$. Whereas for unbiased estimation of the risk \eqref{risk}, we would use prediction errors generally, in order to capitalise on the result \eqref{subsample}. We shall therefore consider the predictor obtained over all possible sample-splits, i.e. $s_1 \sim q(s_1 \mid s)$, which is defined as 
\begin{equation} \label{SRB}
\bar{\mu}(x, s) = E_q\{\mu(x, s_1) \mid s\} ~.
\end{equation}
Due to the Rao-Blackwell Theorem (\citealp{rao1945info, blackwell1947conditional}), we refer to the operation $E_q(\cdot \mid s)$ as \emph{subsampling Rao-Blackwellisation (SRB)}, since it is the conditional expectation over the subsampling distribution $q(s_1 \mid s)$, where the unordered $s$ is the minimal sufficient statistic with respect to $p(s)$.

We refer to $\bar{\mu}(x, s)$ given by \eqref{SRB} as the {SRB-predictor}, associated with 
\[
D(s; \bar{\mu}) = \sum_{i\in R} e_i(\bar{\mu})^2 \qquad\text{and}\qquad e_i(\bar{\mu}) = \bar{\mu}(x_i, s) - y_i ~.
\] 
For any $i\in R$ with $x_i =x$, the errors of $\bar{\mu}(x, s)$ and $\mu(x, s_1)$ are related by 
\[
e_i(\mu, s_1) = \mu(x, s_1) - y_i = \left\{ \mu(x, s_1) - \bar{\mu}(x, s) \right\} + e_i(\bar{\mu}) ~.
\]
Since $e_i(\bar{\mu})$ is constant of $s_1$ and $\bar{\mu}(x, s) = E_q\{ \mu(x, s_1) | s \}$ by definition \eqref{SRB}, we have
\begin{equation} \label{e2SRB}
e_i(\bar{\mu})^2 = E_q\{ e_i(\mu, s_1)^2 \mid s \} - E_q\{ a_i(\mu, s_1)^2 \mid s \} 
\end{equation}
where $a_i(\mu, s_1) = \mu(x, s_1) - \bar{\mu}(x, s)$ and $E_q\{ a_i(\mu, s_1)^2 \mid s \}$ is the variance of $\mu(x, s_1)$ under SRB.

\begin{theorem} For any given $\mu(\cdot)$, an unbiased estimator of the risk $\tau(\bar{\mu})$ of the corresponding SRB-predictor, over $s\sim p(s)$, is given by
\[
\hat{D}(s; \bar{\mu}) = E_q\Big( \sum_{i \in s_2} ( \pi_{2i}^{-1} -1 ) \left\{ e_i(\mu, s_1)^2 - a_i(\mu, s_1)^2 \right\} \mid s \Big)~.
\] 
\label{theorem}
\end{theorem}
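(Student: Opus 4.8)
The plan is to show $E_p\{\hat D(s;\bar\mu)\}=E_p\{D(s;\bar\mu)\}=\tau(\bar\mu)$ by gluing together the two facts already in hand — the conditional unbiasedness \eqref{subsample} of the Horvitz--Thompson-type TSEP estimator under the splitting design, and the SRB identity \eqref{e2SRB} — via the law of iterated expectation for the $pq$-design, $f(s_1,s)=p(s)q(s_1\mid s)=f(s_1)f(s\mid s_1)$. First I would rewrite the target: by \eqref{e2SRB}, $E_q\{e_i(\mu,s_1)^2-a_i(\mu,s_1)^2\mid s\}=e_i(\bar\mu)^2$ for each unit $i$ with feature value $x_i$, and since $R=U\setminus s$ is fixed once $s$ is fixed, summing over $i\in R$ gives $D(s;\bar\mu)=E_q\big\{\sum_{i\in R}[e_i(\mu,s_1)^2-a_i(\mu,s_1)^2]\mid s\big\}$, hence $\tau(\bar\mu)=E_pE_q\big\{\sum_{i\in R}[e_i(\mu,s_1)^2-a_i(\mu,s_1)^2]\mid s\big\}$. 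It therefore suffices to prove, for $g_i$ equal to either $e_i(\mu,s_1)^2$ or $a_i(\mu,s_1)^2$, the extrapolation identity $E_pE_q\big\{\sum_{i\in s_2}(\pi_{2i}^{-1}-1)g_i\mid s\big\}=E_pE_q\big\{\sum_{i\in R}g_i\mid s\big\}$; subtracting the two instances and using the rewrite of $\tau(\bar\mu)$ then closes the argument.

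For that extrapolation identity I would pass to the other factorisation of the $pq$-design, $E_pE_q\{\cdot\mid s\}=E_{f(s_1)}E_s\{\cdot\mid s_1\}$, and work conditionally on $s_1$ exactly as in the derivation of \eqref{subsample}: given $s_1$, the sets $s_2$ and $R$ partition $U\setminus s_1$, the conditional inclusion probabilities $\pi_{2i}=\Pr(i\in s_2\mid s_1)$ of \eqref{pi2} make $\sum_{i\in s_2}\pi_{2i}^{-1}g_i$ the Horvitz--Thompson estimator of the fixed total $\sum_{i\in U\setminus s_1}g_i$, so $E_s\{\sum_{i\in s_2}(\pi_{2i}^{-1}-1)g_i\mid s_1\}=\sum_{i\in U\setminus s_1}g_i-E_s\{\sum_{i\in s_2}g_i\mid s_1\}=E_s\{\sum_{i\in R}g_i\mid s_1\}$; averaging over $s_1\sim f(s_1)$ and folding back into $f(s_1,s)=p(s)q(s_1\mid s)$ yields the identity for the prediction-error term.

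The step I expect to be the real obstacle is this extrapolation with $g_i=a_i(\mu,s_1)^2$. Whereas $e_i(\mu,s_1)^2$ is a genuine constant once $s_1$ is frozen, the per-split deviation $a_i(\mu,s_1)^2=\{\mu(x_i,s_1)-\bar\mu(x_i,s)\}^2$ carries $\bar\mu(x_i,s)=E_q\{\mu(x_i,s_1)\mid s\}$, which depends on the entire sample $s$ and not merely on $s_1$, so the Horvitz--Thompson step above is not literally available with $s_1$ held fixed, and the variance correction cannot be handled on quite the same footing as the prediction-error term. The resolution must instead exploit that, inside the outer operator $E_q(\cdot\mid s)$, both $R$ and the map $\bar\mu(\cdot,s)$ are frozen: one performs the SRB averaging conditionally on $s$ — where \eqref{e2SRB} identifies $E_q\{e_i(\mu,s_1)^2-a_i(\mu,s_1)^2\mid s\}$ with $e_i(\bar\mu)^2$ for out-of-sample units — and only afterwards the sampling-design expectation conditionally on $s_1$, where \eqref{subsample} governs the $\pi_{2i}$-weighted extrapolation. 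Verifying that $\hat D(s;\bar\mu)$ admits precisely this ordering of the two conditional expectations, so that the SRB-variance piece and the $\pi_{2i}$-weighted prediction-error piece recombine into $\sum_{i\in R}e_i(\bar\mu)^2$ under $E_p$, is the crux of the proof.
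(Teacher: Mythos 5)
Your handling of the prediction-error term reproduces the paper's argument exactly: decompose $D(s;\bar{\mu})$ via \eqref{e2SRB}, pass between the two factorisations $f(s_1,s)=p(s)q(s_1\mid s)=f(s_1)f(s\mid s_1)$, and apply the conditional result \eqref{subsample} with the Horvitz--Thompson step given $s_1$. The genuine gap is that you never establish the corresponding extrapolation identity for the variance-correction term $g_i=a_i(\mu,s_1)^2$. You correctly observe that $a_i(\mu,s_1)=\mu(x_i,s_1)-\bar{\mu}(x_i,s)$ involves $\bar{\mu}(\cdot,s)$ and hence the whole sample $s$, so the conditional-on-$s_1$ argument is not literally available; but you then only gesture at a ``resolution'' (doing the SRB average before the design expectation) and close by saying that verifying this recombination ``is the crux of the proof''. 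That is a statement of what remains to be proved, not a proof: as written, the unbiasedness of the $a_i^2$ part of $\hat{D}(s;\bar{\mu})$, and therefore of the estimator itself, is left unestablished. Moreover, the sketched reordering does not go through as described: in $\hat{D}(s;\bar{\mu})$ the sum runs over $i\in s_2$, and the indicator $\mathbb{I}(i\in s_2)$ is itself a function of $s_1$, hence correlated under $q(s_1\mid s)$ with $e_i(\mu,s_1)^2-a_i(\mu,s_1)^2$; and \eqref{e2SRB} is stated for $i\in R$, not for sample units, so it cannot simply be applied inside $E_q(\cdot\mid s)$ ahead of the design extrapolation. The two conditional expectations do not commute in the way you suggest.

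For comparison, the paper disposes of this step bluntly: it replaces $e_i(\mu,s_1)^2$ by $a_i(\mu,s_1)^2$ and repeats the derivation of \eqref{subsample} verbatim for the second term of $D(s;\bar{\mu})$, i.e.\ it treats $a_i(\mu,s_1)^2$ on the same footing as a quantity handled by the conditional-on-$s_1$ Horvitz--Thompson argument. The subtlety you flag---the dependence of $a_i$ on $s$ through $\bar{\mu}(\cdot,s)$---is exactly what that one-line ``same derivation'' passes over, so you deserve credit for spotting where the delicate point lies; but to have a complete proof you must either carry out that parallel derivation explicitly, stating what is held fixed when conditioning on $s_1$, or supply an additional argument that accounts for the variation of $\bar{\mu}(\cdot,s)$ with $s_2$. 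As it stands, your proposal proves unbiasedness only for the $\sum_{i\in R}e_i(\mu,s_1)^2$ component and defers the decisive part of Theorem \ref{theorem}.
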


\begin{proof} By \eqref{e2SRB}, we have 
\[
D(s; \bar{\mu}) = E_q\{ \sum_{i\in R} e_i(\mu, s_1)^2 \mid s \} - E_q\{ \sum_{i\in R} a_i(\mu, s_1)^2 \mid s \} 
\]
For the first term that can be rewritten as $E_q\{ D_R(s_1; \mu) \mid s \}$, the estimator 
\begin{equation} \label{ARhat}
\hat{E}_q\big( D_R(s_1; \mu) \mid s \big) = E_q\big(\hat{D}_R(s_1; \mu) \mid s\big)
\end{equation}
is unbiased over $p(s)$ since, using \eqref{subsample}, we have
\begin{align*}
E_p\{ E_q\big( \hat{D}_R(s_1; \mu) \mid s\big) \} & = E_{s_1}\{ E_s\big( \hat{D}_R(s_1; \mu)\mid s_1\big) \}  \\
& = E_{s_1}\{ E_s\big( D_R(s_1; \mu) \mid s_1\big) \} = E_p\left\{E_q\big( D_R(s_1; \mu) \mid s \big) \right\}.
\end{align*}
On replacing $e_i(\mu, s_1)^2$ by $a_i(\mu, s_1)^2$, one can carry through the same derivation for the second term of $D(s; \bar{\mu})$ above, $E_q\{ \sum_{i\in R} a_i(\mu, s_1)^2 \mid s \}$. It follows that
\[
E_p\{\hat{D}(s;\bar{\mu}) \} = E_p\{ D(s;\bar{\mu}) \} = \tau(\bar{\mu}) ~.
\]
This completes the proof.
\end{proof}

\subsection{Notes on implementation}

Exact SRB operation is not tractable analytically for algorithm-based $\mu(\cdot)$, such as random forest or support vector machine. Nor is exact SRB numerically feasible if the number of possible $s_1$ given $s$ is large. In practice, one can use the Monte Carlo SRB predictor based on $T$ subsamples, which is given as
\[
\begin{cases} \tilde{\mu}(x_i, s) = T^{-1} \sum_{t=1}^T \mu(x_i, s_1^{(t)}) & \text{if } i\in R \\
	\mathring{\mu}(x_i, s) = T_i^{-1} \sum_{t=1}^T \mathbb{I}\big( i\notin s_1^{(t)} \big) \mu(x_i, s_1^{(t)}) & \text{if } i\in s
\end{cases}
\]
where $s_1^{(t)}$ is the $t$-th subsample by $q(s_1 \mid s)$ and $T_i =  \sum_{t=1}^T \mathbb{I}\big( i\notin s_1^{(t)} \big)$. 

For the estimation of risk, let $e_i(\mu, s_1^{(t)}) = \mu(x_i, s_1^{(t)}) - y_i$ for any $i\in s_2^{(t)}$ directly, whereas one can use $a_i(\mu, s_1^{(t)}) = \mu(x_i, s_1^{(t)}) - \mathring{\mu}(x_i, s)$ as an out-of-bag approximation to $\mu(x_i, s_1^{(t)}) - \bar{\mu}(x_i, s)$, for any $i\in s_2^{(t)}$, instead of $\mu(x_i, s_1^{(t)}) - \tilde{\mu}(x_i, s)$ that would be a residual-based alternative. The Monte Carlo risk estimator is given by
\begin{equation} \label{D.est}
\tilde{D}(s; \bar{\mu}) = \frac{1}{T} \sum_{t=1}^T \sum_{i\in s_2^{(t)}} \big( \pi_{2i}^{-1} -1 \big) \{ e_i(\mu, s_1^{(t)})^2 - a_i(\mu, s_1^{(t)})^2 \}~. 
\end{equation}

By definition, $\pi_{2i}$ in \eqref{D.est} is the conditional $s_2$-inclusion probability given $s_1$, which is given by \eqref{pi2} and requires $f(s\mid s_1)$ that is derived from $q(s_1 \mid s) p(s)$. However, $p(s)$ is unknown for many unequal-probability sampling algorithm in practice, e.g. the cube method \citep{deville2004cube}, although the inclusion probability $\pi_i = \Pr(i\in s)$ is always known. 

One can use instead another sampling probability of the $pq$-design. For any $i\in U$, let its conditional test-set inclusion probability \emph{given $i\notin s_1$} be
\begin{equation} \label{phi2}
\phi_{2i} = \Pr(i\in s_2 \mid i\notin s_1) = \frac{\Pr(i\in s_2, i\notin s_1)}{\Pr(i\notin s_1)} 
= \frac{\pi_i \{ 1 - \Pr(i\in s_1 \mid i\in s)\}}{1 -  \pi_i \Pr(i\in s_1 \mid i\in s)} ~.
\end{equation}
Given $\pi_i$, the probability $\phi_{2i}$ can be calculated as long as $\Pr(i\in s_1 \mid i\in s)$ does not depend on $i$ and can be specified regardless the realised $s$, such as simple random sampling (SRS) of $s_1$ from $s$ with or without replacement, or $T$-fold cross-validation. Since
\[
\phi_{2i}  = \frac{\sum_{s_1:i\notin s_1} \sum_{s: i\in s \cap i\notin s_1} f(s\mid s_1) f(s_1)}{\sum_{s_1:i\notin s_1} f(s_1)}  
= \frac{\sum_{s_1:i\notin s_1} \pi_{2i} f(s_1)}{\sum_{s_1:i\notin s_1} f(s_1)}  
= E_{s_1}\{ \pi_{2i} \mid i\notin s_1\} ~,
\]
it is the conditional expectation of non-zero $\pi_{2i}$, since $\pi_{2i} =0$ iff $i\in s_1$.

For instance, take the special case of SRS without replacement of $s$ from $U$ and SRS without replacement of $s_1$ from $s$, with sample sizes $n = |s|$, $n_1 = |s_1|$ and $n_2 = |s_2| = n - n_1$. For any given $s_1$ and $i\notin s_1$, we have exactly
\begin{align*}
\pi_{2i} = \frac{\Pr(i\in s_2) f(s_1 \mid i\in s_2)}{f(s_1)} = \frac{\frac{n_2}{N} \binom{N-1}{n_1}^{-1}}{\binom{N}{n_1}^{-1}} 
= \frac{n_2}{N-n_1} = \phi_{2i}
\end{align*}

\subsection{SRB-selector}

Beyond using a single base learner $\mu$ for the SRB-predictor \eqref{SRB}, consider design-based ensemble learning by voting given an order-$K$ heterogeneous ensemble $\{ \mu_1, ..., \mu_K \}$. Let $D(s; \mu_k) = \sum_{i\in R} \{ \mu_k(x_i, s) - y_i \}^2$. Denote by $\Omega = \bigcup_{k=1}^K \Omega_k$ the partition of the sample space such that, for any $s\in \Omega_k$ and $l\neq k$, we have
\[
D(s; \mu_k) < D(s; \mu_l)
\]
where we discount the possibility of $D(s; \mu_k) = D(s; \mu_l)$ merely to simplify the exposition. To {select} a single predictor for $R$ based on a given sample $s$, which minimises the risk \eqref{risk}, one would vote for $\mu_k(x, s)$ iff $s\in \Omega_k$. It follows that, for design-based ensemble learning by voting, the optimal selector is the perfect classifier of $\mathbb{I}(s\in \Omega_k)$.

It is a common approach to select a model by cross-validation and majority-vote, where cross-validation is based on $s_1 \sim q(s_1 | s)$ and $s_2 = s\setminus s_1$. The expected selection result is given by the {SRB-selector} below. Given any $(s_1, s_2)$ by $q(s_1 | s)$ and any $k=1, ..., K$, let
\[
\delta_k(s_1) = \begin{cases} 1 & \text{if}~~ k = \arg \min\limits_{l=1,...,K}~ \sum\limits_{i\in s_2} \left\{ \mu_l(x_i, s_1) - y_i\right\}^2\\ 
	0 & \text{otherwise} \end{cases}  
\]
indicate which predictor has the least sum of squared errors in $s_2$. The {SRB-selector} 
\begin{equation} \label{SRB-selector}
\bar{\delta}_k(s) = \begin{cases} 1 & \text{if}~~ k = \arg \max\limits_{l=1,...,K}~E_q\left\{ \delta_k(s_1) | s \right\} \\ 
		0 & \text{otherwise} \end{cases}  
\end{equation}
is a classifier of $\mathbb{I}(s\in \Omega_k)$, i.e. the expected majority-vote over cross-validation. 

Given the selection by \eqref{SRB-selector}, say, $\mu_k$, one can reuse the same cross-validation samples $(s_1, s_2)$ to obtain the selected SRB-predictor $\bar{\mu}_k$ and its associated risk.

\subsection{Mixed SRB-predictor}

For design-based averaging given an order-$K$ ensemble $\{ \mu_1, ..., \mu_K \}$, let the \emph{mixed SRB-predictor} be
\begin{equation} \label{mixSRB-K}
\mu(x, s) = \sum_{k=1}^K w_k \bar{\mu}_k(x, s)
\end{equation} 
where $\sum_{k=1}^K w_k = 1$ and $w_k > 0$ for the mixing weights, $k=1, \ldots, K$. We have
\begin{align*}
D(s; \mu) = \sum_{k\neq 1} w_k^2 D_{kk} + \big( 1- \sum_{l\neq 1} w_l\big)^2 D_{11} 
	+ \sum_{k=1}^K \sum_{l\neq k,1} w_k w_l D_{kl} + 2 \sum_{k\neq 1} w_k (1- \sum_{l\neq 1} w_l) D_{1k} 
\end{align*}
now that $w_1 = 1 - \sum_{k\neq 1} w_k$, where $D_{kk} = D(s; \bar{\mu}_k)$ and $D_{kl} = D(s; \bar{\mu}_k, \bar{\mu}_l)$ is given by 
\[
D_{kl} = \sum_{i \in R} e_i(\bar{\mu}_1) e_i(\bar{\mu}_2) 
= \sum_{i \in R} E_q\{e_i(\mu_1, s_1)e_i(\mu_2, s_1) \mid s\} - E_q\{a_i(\mu_1, s_2) a_i(\mu_2, s_2) \mid s\}, 
\]
i.e. similarly to \eqref{e2SRB}. An estimator of $D_{kl}$ follows as a corollary of Theorem \ref{theorem}, as well as its Monte Carlo approximation similarly to \eqref{D.est}.

The optimal mixing weights $w_k$ minimise $D(s; \mu)$. The estimated $\hat{w}_k$ can be obtained via $\hat{D}(s; \mu)$ given $\hat{D}_{kl}$, for all $k,l = 1, ..., K$. Substituting $\hat{w}_k$ in \eqref{mixSRB-K} yields the mixed SRB-predictor. The associated risk \eqref{risk} can be estimated by $\hat{D}(s; \mu)$. 

Whilst the above approach aims at minimum risk \eqref{risk}, it may experience instability when the ensemble is not sufficiently heterogeneous. A robust approach to mixed ensemble prediction should automatically aim at the same mixing weight of two component predictors that are equal to other. 

For any $k=1, ..., K$, write $\hat{D}(s; \bar{\mu}_k) = E_q( \hat{\tau}_k \mid s )$, similarly to $\hat{D}(s; \bar{\mu})$ in Theorem \ref{theorem}. Regarding the risk of $\bar{\mu}_k(x, s)$ defined by \eqref{risk}, we have
\[
\tau(\bar{\mu}_k) = E_{s_1}\{\tau(\bar{\mu}_k | s_1) \} = E_{s_1}\left\{ E_s\big( \hat{\tau}_k \mid s_1 \big)\right\} 
= E_p\left\{ E_q\big( \hat{\tau}_k \mid s \big) \right\}  
\]
where $\tau(\bar{\mu}_k | s_1)$ is its conditional risk given $s_1$. Let the SRB operation yield
\begin{equation} \label{alpha}
w_k = E_q(\delta_k \mid s),  \quad
	\delta_k = \begin{cases} 1 & \text{if}~~ k = \arg \min\limits_{l=1,...,K} \hat{\tau}_l \\ 0  & \text{otherwise} \end{cases} .
\end{equation}
The corresponding mixed SRB-predictor \eqref{mixSRB-K} is robust against $\mu_k \approx \mu_l$ for any $k\neq l$. While the SRB-selector \eqref{SRB-selector} is a binary classifier taking the majority-vote over all $(s_1, s_2)$, the robust mixing weight \eqref{alpha} is a proportion over all the votes.

\section{Illustration} \label{sec:simulation}

Simulations below provide a simple illustration of the design-based individual prediction approach and the potential pitfalls of the IID error model.

We generate $B=200$ sets of $y_U$ of population size $N=2000$ in an {ad hoc} manner. For each $y_U$, half of them are generated by M1 below and half of them by M2, where $x_1 \sim N(0,1)$ and $x_2 \sim \mbox{Poisson}(5)$, 
\begin{align*}
\mbox{(M1)}\quad y & = x_1 + 0.5 x_2 + \epsilon, \quad 
	\epsilon\sim \begin{cases} N(0,1) & \mbox{if } z=1 \Leftrightarrow x_2 < 3\\ 
		N(-2,1) & \mbox{if } z=2 \Leftrightarrow 3\leq x_2 < 7\\ N(2,1) & \mbox{if } z= 3\Leftrightarrow x_2 \geq 7 \end{cases} \\
\mbox{(M2)}\quad y & = 0.5 + 1.5x_1 + x_2 + \epsilon, \quad \epsilon \sim z^2 + N(0, 0.25), \quad z\sim N(0,1),
\end{align*}
From each population we draw a sample of size $n=200$ either by SRS without replacement or Poisson sampling. For Poisson sampling, we set $\pi_i^{-1} \propto 1+ 1/\mathrm{exp}(\alpha+0.5y_i)$ and $\sum_{i \in U} \pi_i =n$, where $\alpha \in \{ 1, -0.1, -1\}$ leads to the coefficient of variation of $\pi_i$ over $U$, denoted by cv$_{\pi}$, to be about 15\%, 30\% and 45\%, respectively. This illustrates a situation where sample selection may cause issues for model-based uncertainty assessment.  

Let an order-3 ensemble contain linear regression, random forest and support vector machine. Let the feature vector be $x = (x_1, x_2)$ in all cases. We use a 70-30 random split for subsampling of $(s_1, s_2)$ and $T = 50$ for relevant Monte Carlo SRB operations such as \eqref{D.est}. We obtain the SRB-predictor \eqref{SRB} selected by \eqref{SRB-selector}, and the two mixed SRB-predictors using weights that are either optimal for \eqref{mixSRB-K} or robust \eqref{alpha}. Moreover, for each simulation $b=1, ..., B$, let the hypothetical SRB-selector \eqref{SRB-selector} be
\[
\bar{\delta}_k(s^{(b)}) = 
\begin{cases} 1 & \text{if}~ k = \arg \min\limits_{l=1,...,K} \sum\limits_{i\in R^{(b)}} \left\{\tilde{\mu}_l(x_i, s^{(b)}) - y_i\right\}^2 \\ 
	0 & \text{otherwise} \end{cases}  
\]
which is based on the true total squared error of predictions of the Monte Carlo SRB-predictors; let the hypothetical optimal mixing weights $w_k^{(b)}$ minimise the true total squared error of prediction of \eqref{mixSRB-K}. This yields the hypothetical perfectly selected or optimally mixed predictors, respectively.

For each predictor, we estimate its standardised risk \eqref{risk}, $\tau/|R|$, as described in Section \ref{sec:theory}, where we have $\pi_{2i} \equiv n_2/(N- n_1)$ under SRS given $n_1 = |s_1|$ and $n_2 = |s_2|$, and we use $\phi_{2i}$ given by \eqref{phi2} instead of $\pi_{2i}$ under Poisson sampling. Note that if $\hat{\tau}$ is unbiased for $\tau$ over repeated sampling from a given population, then it is also unbiased for the average of mean squared error of prediction, $D/|R|$, over all the 200 populations. Note also that $\hat{\tau}$ calculated for the two hypothetical predictors are not affected by the actual uncertainty associated with model selection or estimating the mixing weights. 

For comparison, we consider two estimators of the mean squared error of prediction,  both of which rely on the IID model of prediction errors; see e.g. \cite{james2013introduction}. 
\begin{itemize}[leftmargin=6mm]
\item {Residual-based} estimator $\sum_{i\in s} \hat{e}_i^2/n$ where $\hat{e}_i = \tilde{\mu}(x_i, s)- y_i$ for given predictor. 
\item Given $\mu$ either selected or mixed, let the cross-validation-based estimator be
\[
\frac{1}{T} \sum_{t=1}^T \frac{1}{n_2} \sum_{i\in s_2^{(t)}} \{\mu(x_i, s_1^{(t)}) - y_i\}^2, 
\quad \tilde{\mu}(x_i, s) = \frac{1}{T} \sum_{t=1}^T \mu(x_i, s_1^{(t)}) ~.
\]
\end{itemize}

\begin{table}[ht]
\centering
\caption{Proportion selected by \eqref{SRB-selector}, average optimal \eqref{mixSRB-K} or robust \eqref{alpha} mixing weights. SRS, without replacement; PS, Poisson sampling; LR, Linear regression; RF, Random forest; SVM, Support vector machine.}
\begin{tabular}{cccccccc} \toprule
& & \multicolumn{3}{c}{SRS} & \multicolumn{3}{c}{PS (cv$_{\pi}$=15\%)}\\ 
& & LR & RF & SVM & LR & RF & SVM \\ \cline{3-8}
\multirow{2}{*}{Hypothetical}& Selected  & 1  & 0 & 0 & 1 & 0 & 0 \\ 
& Mixed, optimal & 0.73 & 0.21 & 0.06 & 0.81 & 0.18 & 0.01 \\ \cline{2-8}
\multirow{3}{*}{Actual}  & Selected & 0.97  & 0.02 & 0.01 & 1 & 0 & 0 \\ 
&		Mixed, optimal & 0.74 & 0.18 & 0.08 & 0.82 & 0.15 & 0.02 \\
&	Mixed, robust & 0.68 & 0.20 & 0.12 & 0.75 & 0.18 & 0.07 \\ \toprule
&	& \multicolumn{3}{c}{PS (cv$_{\pi}$=30\%)} & \multicolumn{3}{c}{PS (cv$_{\pi}$=45\%)}\\ 
&	& LR & RF & SVM & LR & RF & SVM \\ \cline{3-8}
\multirow{2}{*}{Hypothetical}	&	Selected & 1 & 0 & 0 & 1 & 0 & 0 \\ 
&	Mixed, optimal & 0.9 & 0.1 & 0 &  0.98 & 0.02 & 0 \\ \cline{2-8}
 \multirow{3}{*}{Actual} 	&	Selected & 0.995 & 0 & 0.005 & 1 & 0 & 0 \\ 
&	Mixed, optimal & 0.87 & 0.13 & 0 & 0.84 & 0.08 & 0 \\
&	Mixed, robust & 0.78 & 0.19 & 0.03 & 0.80 & 0.18 & 0.02 \\ \bottomrule
\end{tabular} \label{tab:selection}
\end{table}

Table \ref{tab:selection} provides a summary of the models and mixing weights used over the 200 simulations. Linear regression is the best single-model predictor for all the $(y_U, x_U, s)$ generated by the ad hoc mechanism. The hypothetical and actual selectors (or optimal mixing weights) are quite close to each other, whereas the linear regression model is somewhat weighted down for the robust mixing weights. In any case, we are not concerned with best possible prediction here, but rather valid estimation of the errors of any given predictor. 

\begin{table}[ht]
\centering
\caption{Mean squared error of prediction ($D/|R|$) and estimates, averaged over 200 simulations; predictor selected by majority-vote, or averaged by optimal or robust mixing weights. MSEP, Mean squared error of prediction; SRS, without replacement; PS, Poisson Sampling; CV-based, Cross-validation-based.} 
\begin{tabular}{ccccccc} \toprule
& \multicolumn{3}{c}{SRS} & \multicolumn{3}{c}{PS (cv$_{\pi}$=15\%)} \\ 
MSEP $D/|R|$ & Selected & Optimal & Robust & Selected & Optimal & Robust \\ 
Average, true  & 8.432 & 8.367 & 8.380 & 8.566 & 8.558 & 8.578 \\ 
Design, hypothetical & 8.399 & 8.300 & - & 8.566 & 8.513 & - \\ 
Design, actual & 8.395 & 8.260 & 8.284 & 8.416 & 8.343 & 8.372 \\ 
Model, CV-based  & 8.453 & 8.341 & 8.374 & 8.014 & 7.981 & 8.009 \\ 
Model, residual-based   & 8.076 & 7.264 & 7.178 & 7.766 & 7.146 & 7.008 \\  \cline{2-7}
& \multicolumn{3}{c}{PS (cv$_{\pi}$=30\%)} & \multicolumn{3}{c}{PS (cv$_{\pi}$=45\%)} \\ 
MSEP $D/|R|$ & Selected & Optimal & Robust & Selected & Optimal & Robust \\ 
Average, true & 9.021 & 9.043 & 9.072 & 9.866 & 9.915 & 9.981 \\ 
Design, hypothetical & 9.013 & 8.994 &  -  & 9.866 & 9.862 & - \\ 
Design, actual  & 8.767 & 8.711 & 8.747 & 9.288 & 9.257 & 9.316 \\ 
Model, CV-based & 7.566 & 7.563 & 7.590  & 6.992 & 6.997 & 7.035 \\ 
Model, residual-based  & 7.323 & 6.861 & 6.638& 6.776 & 6.509 & 6.187 \\ \bottomrule
\end{tabular} \label{tab:inference}
\end{table}

Table \ref{tab:inference} shows the average of mean squared error of prediction and its estimates over the 200 simulations. The design-based risk estimator for the two hypothetical predictors are essentially unbiased; the approximate $\phi_{2i}$ under Poisson sampling have worked as well as the exact $\pi_{2i}$ under SRS. Due to the missing uncertainty associated with the actual SRB-selector and mixing-weights, the risk estimator for the three actual predictors exhibit some underestimation, the worst of which is $-6.6\%$ under Poisson sampling with cv$_{\pi}$=45\%.  

The cross-validation-based estimator of mean squared error of prediction is nearly unbiased when the sample is actually selected by SRS, where the out-of-bag squared prediction errors in the test sample $s_2$ have the same mean as those in $R$, conditional on $s_1$ under the twice-SRS $pq$-design. This is reasonable, because the IID error model would be valid under SRS with replacement. However,  the cross-validation-based estimator can become severely biased, if the IID model does not hold for the actual sample selection mechanism, as illustrated here for Poisson sampling as cv$_{\pi}$ increases. Finally, residual-based estimation of mean squared error of prediction should be avoided because it causes underestimation generally, e.g. the bias is severe for the mixed predictors even under simple random sampling.

\section{Final remarks} \label{sec:final}

Although the IID model of the unobserved individual prediction errors is commonly applied  for algorithm-based machine learning, it can be misleading in situations where the observations are not selected with an equal probability.

We define and develop a design-based approach to individual prediction, which requires the sample for learning to be selected by a  probability design. Ensemble learning by voting or averaging is aimed at the expected predictor by cross-validation. Valid inference of the associated risk \eqref{risk} can be obtained with respect to the known probability design, regardless if the adopted predictor is true or not. In practice, stratified simple random sampling can be considered instead of individually varying sampling probabilities, which can help to reduce the uncertainty due to model-selection or mixing-weight estimation. 

Combining sampling with machine learning can be relevant for many other settings. For instance, we have only considered ensemble learning by voting and averaging, but not the other approaches such as gating or stacking. Or, the training for deep learning or neural networks taking graph, instead of vector, as inputs may need to be based on subsets of data or subgraphs, due to limited memory size or computing power, for which sampling methods and design-based estimation require their own studies.

\bibliographystyle{biometrika}
\bibliography{paper-ref}

\end{document}